\documentclass[10pt,twocolumn,letterpaper]{article}

\usepackage{iccv}
\usepackage{times}
\usepackage{epsfig}
\usepackage{amsmath}
\usepackage{amssymb}
\usepackage{fixltx2e} 
\usepackage{graphicx} 
\graphicspath{{figures/}}
\usepackage{multirow}
\usepackage{wrapfig}
\usepackage{subcaption}
\usepackage{xspace}
\usepackage{amsthm}

\usepackage[pagebackref=true,breaklinks=true,letterpaper=true,colorlinks,bookmarks=false]{hyperref}

\iccvfinalcopy 




\newcommand{\Tab}[1]{Tab.~\ref{#1}}
\newcommand{\Fig}[1]{Fig.~\ref{#1}}

\newtheorem{theorem}{Theorem}
\newtheorem{definition}{Definition}

\newtheorem{proposition}{Proposition}
\newtheorem{remark}{Remark}


\ificcvfinal\pagestyle{empty}\fi

\begin{document}

\title{Learning Implicit Generative Models by Matching Perceptual Features}

\author{
    Cicero Nogueira dos Santos\thanks{\ Equal contribution.} , Youssef Mroueh\footnotemark[1] , Inkit Padhi\footnotemark[1] , Pierre Dognin\footnotemark[1] \\
    IBM Research, T.J. Watson Research Center, NY  \\
    {\tt\small \{cicerons,mroueh,pdognin\}@us.ibm.com, inkit.padhi@ibm.com}
}


\maketitle

\begin{abstract}
Perceptual features (PFs) have been used with great success in tasks such as transfer learning, style transfer, and super-resolution.
However,
the efficacy of PFs as key source of information for learning generative models is not well studied. 
We investigate here the use of PFs in the context of learning implicit generative models through moment matching (MM).
More specifically, 
we propose a new effective MM approach that learns implicit generative models by performing mean and covariance matching of features extracted from pretrained ConvNets.
Our proposed approach improves upon existing MM methods by: 
(1) breaking away from the problematic min/max game of adversarial learning;
(2) avoiding online learning of kernel functions;
and (3) being efficient with respect to both number of used moments and required minibatch size.
Our experimental results demonstrate that,
due to the expressiveness of PFs from pretrained deep ConvNets, our method achieves state-of-the-art results for challenging benchmarks.
\end{abstract}

\section{Introduction}
The use of features from deep convolutional neural networks (DCNNs) pretrained on ImageNet \cite{russakovsky:2015} has led to important advances in computer vision.
DCNN features,
usually called \emph{perceptual features (PFs)},
have been used in tasks such as transfer learning \cite{BengioDeepTransfer,ImagenetTransfer}, style transfer \cite{Gatys_2016_CVPR} and super-resolution \cite{Johnson2016Perceptual}.
While there have been previous works on the use of PFs in the context of image generation and transformation \cite{dosovitskiyNIPS2016,Johnson2016Perceptual},
exploration of PFs as key source of information for learning generative models is not well studied.
Particularly, the efficacy of PFs for implicit generative models trained through \emph{moment matching} is an open question.

Moment matching approaches for generative modeling are based on the assumption that one can learn the data distribution by matching the moments of the model distribution to the empirical data distribution.
Two representative methods of this family are based on maximum mean discrepancy (MMD) \cite{gretton:2006MMD,gretton:2012MMD,li:2015:GMMN} and the method of moments (MoM) \cite{RavuriMRV18}.
While MoM based methods embed a probability distribution into a finite-dimensional vector (i.e., matching of a finite number of moments),
MMD based methods embed a distribution into an infinite-dimensional
vector \cite{RavuriMRV18}.
A challenge for MMD methods is to define a kernel function that is statistically efficient and can be used with small minibatch sizes \cite{li:nips2017:MMDGAN}.
A solution comes by using adversarial learning for the online training of kernel functions \cite{li:nips2017:MMDGAN,binkowski2018dMMDGAN}.
However, this solution inherits the problematic min/max game of adversarial learning.
The main challenges of using MoM for training deep generative networks consist in defining millions of sufficiently distinct moments and specifying an objective function to learn the desirable moments.
Ravuri \etal~\cite{RavuriMRV18} addressed these two issues by defining the moments as features and derivatives from a \emph{moment network} that is trained online (together with the generator) by using a specially designed objective function.

In this work
we demonstrate that, by using PFs to perform moment matching,
one can overcome some of the difficulties found in current moment matching approaches.
More specifically,
we propose a simple but effective moment matching method that: (1) breaks away from the problematic min/max game completely;
(2) does not use online learning of kernel functions;
and (3) is very efficient with regard to both number of used moments and required minibatch size.
Our proposed approach, 
named Generative Feature Matching Networks (GFMN),
learns implicit generative models by performing mean and covariance matching of features extracted from all convolutional layers of pretrained deep ConvNets. 
Some interesting properties of GFMNs include:
(a) the loss function is directly correlated to the generated image quality;
(b) mode collapsing is not an issue; and
(c) the same pretrained feature extractor can be used across different datasets.

\begin{figure*}[!th]
\centering
    \includegraphics[width=.9\textwidth]{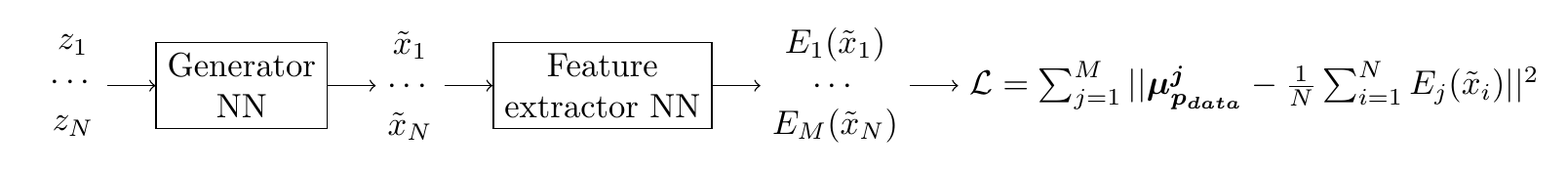}
    \vskip 0in
    \caption{\textbf{GFMN Training}: From $z_1,\dots z_N$ noise signals, generator $G$ creates $N$ images $\tilde{x}_1,\dots \tilde{x}_N$. The fixed pretrained feature extractor $E$ is used to obtain $E_j(\tilde{x}_i)$ features. $\mathcal{L}$ is the $L_2$-norm of the difference between extracted features means of generated and real data, $\boldsymbol{\mu_{p_{data}}^j}$. We precompute $\boldsymbol{\mu_{p_{data}}^j}$ on the entire real dataset (it does not change during training); the mean of generated data is estimated on a minibatch of size $N$. The same strategy is used for variance terms in $\mathcal{L}$.}
    \label{fig:gfmn}
\end{figure*}

We perform an extensive number of experiments with different challenging datasets: CIFAR10, STL10, CelebA and LSUN.
We demonstrate that our approach can achieve state-of-the-art results for challenging benchmarks such as CIFAR10 and STL10.
Moreover,
we show that the same feature extractor is effective across different datasets.
The main contributions of this work can be summarized as follows: 
(1) We propose a new effective moment matching-based approach to train implicit generative models that does not use adversarial or online learning of kernel functions, 
provides stable training, and achieves state-of-the-art results;
(2) We show theoretical results that demonstrate GFMN convergence under the assumption of the universality of perceptual features;
(3) We propose an ADAM-based moving average method that allows effective training with small minibatches;
(4) Our extensive quantitative and qualitative experimental results demonstrate that pretrained autoencoders and DCNN classifiers can be effectively used as (cross-domain) feature extractors for GFMN training.
 
\section{Generative Feature Matching Networks}

\label{secMethods}
\subsection{The method}
Let $G$ be the generator implemented as a neural network with parameters $\theta$, and let $E$ be a pretrained neural network with $L$ hidden layers.
Our proposed approach consists in training $G$ by minimizing the following loss function:
\begin{equation}
\min_{\theta}\sum_{j=1}^M||\mu^j_{p_{data}} -\mu^j_{p_{G}}(\theta)||^2 + ||\sigma^{j}_{p_{data}} -\sigma^{j}_{p_{G}}(\theta) ||^2
\label{eq:lossG}
\end{equation}
where:
\vskip -0.2in
\begin{align*}
\mu^j_{p_{data}}&=\mathbb{E}_{x\sim p_{data}}E_j(x) \in \mathbb{R}^{d_j} \\
\mu^j_{p_{G}}(\theta)&=\mathbb{E}_{z\sim \mathcal{N}(0,I_{n_z})}E_{j}(G(z;\theta)) \in \mathbb{R}^{d_j} \\
\sigma^{j}_{p_{data},\ell}&=\mathbb{E}_{x\sim p_{data}}E_{j,\ell}(x)^2 - [\mu^{j,\ell}_{p_{data}}]^2,\ell=1\dots d_j \\
\sigma^{j}_{p_{G},\ell}(\theta)&=\mathbb{E}_{z\sim\mathcal{N}(0,I_{n_z})}E_{j,\ell}(G(z;\theta))^2\!-\![\mu^{j,\ell}_{p_{G}}]^2,\ell\!=\!1\dots d_j 
\end{align*}
and $||.||^2$ is the $L_2$ loss;
$x$ is a real data point sampled from the data generating distribution $p_{data}$;
$z \in \mathbb{R}^{n_z}$ is a noise vector sampled from the normal distribution $\mathcal{N}(0,I_{n_z})$;
$E_{j}(x)$, denotes the output vector/feature map of the hidden layer $j$ from $E$;
$M \leq L$ is the number of hidden layers used to perform feature matching.
Note that $\sigma^{2}_{p_{data}}$ and $\sigma^{2}_{p_{G}}$ denote the variances of the features from real data and generated data, respectively.
We use diagonal covariance matrices as computing full covariance matrices is impractical for large numbers of features. 

In practice,
we train $G$ by first precomputing estimates of  $\boldsymbol{\mu_{p_{data}}^j}$ and $\boldsymbol{\sigma^{j}_{p_{data}}}$ on the training data,
then running multiple training iterations where we sample a minibatch of generated (\emph{fake}) data and optimize the parameters $\theta$ using stochastic gradient descent (SGD) with backpropagation.
The network $E$ is used for the purpose of feature extraction only and is kept fixed during the training of $G$. Fig.~\ref{fig:gfmn} presents GFMN training pipeline.


\noindent \textbf{Autoencoder Features}:
A natural choice of unsupervised method to train a feature extractor is the autoencoder (AE) framework.
The decoder part of an AE consists exactly of an image generator that uses features extracted by the encoder.
Therefore,
by design,
the encoder network should be a good feature extractor for the purpose of generation.

\noindent \textbf{Classifier Features:}
We experiment with different DCNN architectures pretrained on ImageNet to play the role of the feature extractor $E$.
Our hypothesis is that ImageNet-based PFs are informative enough to allow the training of (cross-domain) generators by feature matching.

\subsection{Matching Feat. with ADAM Moving Average}
\label{sec:adam_mov_avrg}
\textbf{From feature matching loss to moving averages.} In order to train with a mean and covariance feature matching loss, 
one needs large minibatches to obtain good mean and covariance estimates.
With images larger than 32$\times$32, DCNNs produce millions of features, resulting easily in memory issues. 
We propose to alleviate this problem by using \emph{moving averages of the difference of means (covariances) of real and generated data}.
Instead of computing the (memory) expensive feature matching loss in Eq.~\ref{eq:lossG}, 
we keep moving averages $v_j$ of the difference of feature means (covariances) at layer $j$ between  real and generated data.
We detail our moving average strategy for the mean features only, but the same approach applies for the covariances.  
The mean features from the first term of Eq.~\ref{eq:lossG}, $||\boldsymbol{\mu_{p_{data}}^j} \!-\!\mathbb{E}_{z\sim \mathcal{N}(0,I_{n_z})}E_{j}(G(z;\theta)) ||^2$ can be approximated by:
\begin{align*}
v_j^{\top}  \left(\boldsymbol{\mu_{p_{data}}^j} - \frac{1}{N} \sum_{k=1}^N E_{j}(G(z_k;\theta))\right),
\end{align*}
where $N$ is the minibatch size and $v_j$ is a moving average on $\Delta_j$, the difference of the means of the features extracted by the $j$-th layer of $E$:
\begin{align}
\Delta_j&=\boldsymbol{\mu_{p_{data}}^j} - \frac{1}{N} \sum_{k=1}^N E_{j}(G(z_k;\theta)). 
\end{align}
Using these moving averages we replace the first term of the loss given in Eq. \ref{eq:lossG} by
\begin{align}
\min_{\theta} \! \sum_{j=1}^M v_j^{\top} \! \left(\boldsymbol{\mu_{p_{data}}^j} \!-\! \frac{1}{N} \! \sum_{k=1}^N E_{j}(G(z_k;\theta))\right)\!.
\label{eq:lossG_v}
\end{align}
The moving average formulation of features matching above has a major advantage on the naive formulation of Eq.~\ref{eq:lossG} since we can now rely on $v_j$ to get better estimates of the population feature means of real and generated data while using a small minibatch of size $N$. 
For a similar result using the feature matching loss given in Eq.~\ref{eq:lossG}, one would need a minibatch with large size $N$, which is problematic for large number of features.  

\textbf{ADAM moving average: from SGD to ADAM updates.}
Note that for a rate $\alpha$, the moving average $v_j$ has the following update:
\vskip -0.2in
$$v_{j,\mathrm{new}}= (1-\alpha)*v_{j,\mathrm{old}}+\alpha*\Delta_j, \forall j=1\dots M$$
It is easy to see that the moving average is a gradient descent update on the following loss:
\vskip -0.15in
\begin{equation}
\min_{v_j} \frac{1}{2}||v_j-\Delta_j||^2.
\label{eq:loss_moving_average}
\end{equation}
Hence, writing the gradient update with learning rate $\alpha$ we have equivalently:
\begin{equation*}
 v_{j,\mathrm{new}}= v_{j,\mathrm{old}}-\alpha*(v_{j,\mathrm{old}}-\Delta_j)
= (1-\alpha) * v_{j,\mathrm{old}}+\alpha *\Delta_j.
\end{equation*}
With this interpretation of the moving average, we propose to get a better moving average estimate by using the ADAM optimizer \cite{kingma2014adam} on the loss of the moving average given in Eq.~\ref{eq:loss_moving_average}, such that
\vskip -0.15in
$$v_{j,\mathrm{new}}=v_{j,\mathrm{old}}-\alpha ADAM(v_{j,\mathrm{old}}-\Delta_j ).$$
$ADAM(x)$ function is computed as follows:
\begin{align*}
 m_t &=   \beta_1 * m_{t-1} + (1-\beta_1) * x    & \hat{m}_t &=  m_t / (1-\beta_1^t) \\
 u_t &=   \beta_2 * u_{t-1} + (1-\beta_2) * x^2  & \hat{u}_t &=  u_t / (1-\beta_2^t) \\ 
& ADAM(x) =  \hat{m}_t / (\sqrt{\hat{u}_t} + \epsilon),
\end{align*}
where $x$ is the gradient for the loss function in Eq.~\ref{eq:loss_moving_average},
$t$ is the iteration number,
$m_t$ and $u_t$ are the first and second moment vectors at iteration $t$,
$\beta_1\!=\!.9$, $\beta_2\!=\!.999$ and $\epsilon\!=\!10^{-8}$ are constants.
$m_0$ and $u_0$ are initialized as proposed by  \cite{kingma2014adam}.
We refer to \cite{kingma2014adam} for a detailed ADAM optimizer description.

This moving average formulation,
which we call \emph{ADAM Moving Average} (AMA) promotes stable training when using small minibatches. 
Although we detail AMA using mean feature matching only, we use this approach for both mean and covariance matching.
The main advantage of AMA over simple moving average (MA) is in its adaptive first and second order moments that ensure stable estimation of the moving averages $v_j$. In fact, this is a non-stationary estimation since the mean of the generated data changes in the training, and it is well known that ADAM works well for such online non-stationary losses \cite{kingma2014adam}. 

In Section \ref{sec:StabilityAdamMovingAverage} we provide experimental results supporting: (1) The memory advantage that the AMA formulation of feature matching offers over the naive implementation;
(2) The stability advantage and improved generation results that AMA allows compared to the naive implementation.
We discuss in Appendix~2 the advantage of AMA on MA from a regret bounds point of view \cite{reddi_convergence_2018}.


\section{Universality of PFs and GFMN Convergence}
\label{sec:theory}
Our proposed approach is related to the recent body of work on MMD or MM based generative models \cite{li:2015:GMMN,dziugaite:2015MMD_net,li:nips2017:MMDGAN,binkowski2018dMMDGAN,RavuriMRV18}. We highlight  the main differences between MMD-GANs  and GFMN in terms of requirements on the kernel for MMD-GAN and on the feature map (Extractor) for GFMN, that ensure convergence of the generator to the data distribution.  See Tab.~\ref{tab:compa} for a summary.\\

\noindent\textbf{GMMN, MMD-GAN Convergence: MMD Matching with Universal Kernels.}
We start by reviewing known results on MMD. Let $\mathcal{H}_{k}$ be a Reproducing Kernel Hilbert Space (RKHS) defined with a continuous kernel $k$. Informally, \emph{$k$ is universal if  any bounded continuous function can be approximated to an arbitrary precision  in $\mathcal{H}_{k}$} (formal definition in Appendix ). Theorem \ref{theo:mmd} \cite{gretton:2012MMD}  shows that the MMD is a well defined  metric for universal kernels. 
\vskip -0.3 in 
\begin{theorem}[\cite{gretton:2012MMD}] 
Given a kernel $k$, let $p,q$ be two distributions, their MMD  is:
$\text{MMD}^2(k,p,q)= \left|\left|\mu_p-\mu_q\right|\right|^2_{\mathcal{H}_k},$ where $\mu_p=\mathbb{E}_{x\sim p}k_{x}$ is the  mean embedding. 
 If $k$ is universal then $\text{MMD}^2(k,p,q)=0$ if and only if $p=q$. 
\label{theo:mmd}
\end{theorem}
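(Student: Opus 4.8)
The plan is to prove the two implications of the ``if and only if'' separately. The forward direction is immediate: if $p=q$ then $\mu_p=\mu_q$ by definition of the mean embedding, so $\text{MMD}^2(k,p,q)=\|\mu_p-\mu_q\|^2_{\mathcal{H}_k}=0$, and universality is not even needed. All the work is in the converse: $\text{MMD}^2(k,p,q)=0\Rightarrow p=q$.

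First I would record the variational (dual) form of the MMD. Assuming $k$ is bounded so that $\mu_p=\mathbb{E}_{x\sim p}k_x$ is a well-defined Bochner integral in $\mathcal{H}_k$, the reproducing property $f(x)=\langle f,k_x\rangle_{\mathcal{H}_k}$ together with Fubini gives $\mathbb{E}_{x\sim p}f(x)=\langle f,\mu_p\rangle_{\mathcal{H}_k}$ for every $f\in\mathcal{H}_k$. Hence $\mathbb{E}_pf-\mathbb{E}_qf=\langle f,\mu_p-\mu_q\rangle_{\mathcal{H}_k}$, and taking the supremum over $\|f\|_{\mathcal{H}_k}\le 1$ yields, by Cauchy--Schwarz (with equality attained at $f=(\mu_p-\mu_q)/\|\mu_p-\mu_q\|_{\mathcal{H}_k}$), the identity $\sup_{\|f\|_{\mathcal{H}_k}\le 1}|\mathbb{E}_pf-\mathbb{E}_qf|=\|\mu_p-\mu_q\|_{\mathcal{H}_k}=\text{MMD}(k,p,q)$.

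Next, suppose $\text{MMD}^2(k,p,q)=0$. Then $\mu_p=\mu_q$ in $\mathcal{H}_k$, so $\mathbb{E}_pf=\mathbb{E}_qf$ for \emph{every} $f\in\mathcal{H}_k$. Now I would invoke universality: for any bounded continuous $g$ on the data space $\mathcal{X}$ and any $\varepsilon>0$ there exists $f_\varepsilon\in\mathcal{H}_k$ with $\|g-f_\varepsilon\|_\infty\le\varepsilon$. A three-term split then gives $|\mathbb{E}_pg-\mathbb{E}_qg|\le|\mathbb{E}_p(g-f_\varepsilon)|+|\mathbb{E}_pf_\varepsilon-\mathbb{E}_qf_\varepsilon|+|\mathbb{E}_q(f_\varepsilon-g)|\le\varepsilon+0+\varepsilon$, and letting $\varepsilon\to 0$ shows $\mathbb{E}_pg=\mathbb{E}_qg$ for all bounded continuous $g$. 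Finally I would close with the standard measure-theoretic fact that a finite Borel probability measure on $\mathcal{X}$ (compact metric, or more generally Polish) is uniquely determined by its integrals against bounded continuous functions --- via the Riesz representation theorem or the Portmanteau lemma --- so $p=q$.

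I expect the only delicate point to be the interface between the RKHS approximation and the measure-theoretic conclusion: one must state universality on the correct domain (sup-norm density of $\mathcal{H}_k$ in $C_b(\mathcal{X})$, which typically requires $\mathcal{X}$ compact and $k$ continuous), and use boundedness of $k$ both to guarantee that $\mu_p,\mu_q$ are well defined and to make the three-term estimate uniform. The reproducing-property and Cauchy--Schwarz steps are routine bookkeeping.
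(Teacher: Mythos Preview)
Your argument is correct and is essentially the classical proof from Gretton et al.\ (2012). Note, however, that the paper does \emph{not} supply its own proof of this theorem: it is stated as a cited result and then invoked as a black box in the proof of Proposition~\ref{pro:UnivFeatures}. There is therefore nothing in the paper to compare your proof against beyond the original reference, and your write-up matches that reference's approach (reproducing property $\Rightarrow$ $\mathbb{E}_p f=\mathbb{E}_q f$ on $\mathcal{H}_k$, universality $\Rightarrow$ extension to $C_b$, then measures are separated by $C_b$).
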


\begin{table*}[ht!] 
\small
\begin{center}
\begin{tabular}{| l | l | l | l | l | }
\hline
 & \bf Metric  & \bf Kernel/Feature Map      &\bf~~Generative M.~~ \\
  &  & \bf Convergence Conditions      &\bf~~Optimization ~~ \\
\hline
GMMN \cite{li:2015:GMMN,dziugaite:2015MMD_net} & $\text{MMD}(k,p,q)$ & Universal $k$   & $\min$ prob. \\
\hline
$\text{MMD}$-GAN & $D_{\text{MMD}}(p,q)$ &  $k\circ \phi$   & $\min/\max$ prob. \\
 \cite{li:nips2017:MMDGAN,binkowski2018dMMDGAN,RavuriMRV18}&  &  $k$ Fixed universal \& lipschitz   & \\
  &  &  $\phi$  lipschitz learned   & \\
\hline
GFMN & $\text{MMD}(K_{\Phi},p,q)$ &  Universal Feature Map $\Phi$   & $\min$ prob.  \\
(This work) &  &  $\Phi$  fixed  &$\mu_{p_{data}}$ precomputed  \\
\hline
\end{tabular}

\caption{Comparison of different approaches using MMD matching for implicit generative modeling. GFMN has two practical computational advantages it avoids the min/max game and allows to use a pre-computed mean embedding on the real data. Theoretically GFMN converges to the real data distribution if the feature extractor used was universal (See text for definition of Universal features as given in  \cite{micchelli:2006_uk} ) . }
\label{tab:compa} 
\end{center}
\vskip -0.4in
\end{table*}  
Given a Universal kernel such as a Gaussian Kernel as outlined in GMMN \cite{li:2015:GMMN,dziugaite:2015MMD_net}, one can learn implicit Generative models $G_{\theta}$ that defines a family of distribution $\{q_{\theta}\}$ by minimizing the MMD distance:
\vskip -0.1in
\begin{equation}
\inf_{\theta} \text{MMD}(k,p_{data},q_{\theta})
\label{eq:MMDmin}
\end{equation}
\vskip -0.1in
Assuming $p_{data}$ is in the family $\{q_{\theta}\}$ $\left(\exists \theta^*, q_{\theta^*}=p_{data}\right)$, the infimum of MMD minimization for a universal kernel is achieved for $q_{\theta}\! = \! p_{data}$ (immediate consequence of Theorem~1).
This elegant setup for MMD matching with universal kernels, while avoiding the difficult min/max game in GAN, does not translate into good results in image generation. 
To remedy that, other discrepancies introduced in \cite{li:nips2017:MMDGAN,binkowski2018dMMDGAN,RavuriMRV18} compose universal kernels $k$ with a feature map $\phi \in \Psi$ as follows:
$D_{\text{MMD}}(p,q)=\sup_{\phi \in \Psi} \text{MMD}(k \circ \phi,p,q). $ For learning implicit generative models \cite{li:nips2017:MMDGAN} replaces $\text{MMD}$ in Eq.~\eqref{eq:MMDmin} by $D_{\text{MMD}}$. 
Under  conditions on the kernel and the learned feature map this discrepancy is continuous in the weak topology (Prop.~2 in \cite{arbel2018gradient,li:nips2017:MMDGAN}). Nevertheless,  learning generative models   remains challenging  with it as it boils down to a min/max game as in  original GAN  \cite{goodfellow2014generative}.\\

\noindent \textbf{GFMN Convergence: MMD  Matching with Universal Features.} 
While universality is usually thought on the kernel level, it is not straightforward to define universality for kernels defined by feature maps.
Micchelli \etal \cite{micchelli:2006_uk} define universality of feature maps and how it connects to their corresponding kernels. 
Specifically for a fixed feature set on a space $\mathcal{X}$ (space of images)  $S=\{\phi_j, j \in I,\phi_j: \mathcal{X}\to \mathbb{R}\}$, where $I$ is a countable set of indices, define the  kernel
$K_{\phi}(x,y)=\sum_{j\in I} \phi_j(x)\phi_j(y)$. Micchelli \etal \cite{micchelli:2006_uk} in Thm.~7 show that this Kernel is universal if the set $S$ is universal. 
Informally speaking, \emph{a feature set $S$ is universal if linear functions in this feature space  $(\sum_{j\in I}u_j\phi_j(x))$, are dense in the set of continuous bounded functions} (formal definition in Appendix 1). 

This is of interest since GFMN corresponds to MMD matching with a kernel $K_{\Phi}$ defined on a fixed feature map $\Phi(x) \!=\! \{\phi_j(x)\}_{ j \in I}$, where $I$ is finite. 
We have $K_{\Phi}(x,y) \! = \! \langle \Phi(x),\Phi(y)\rangle \! = \! \sum_{j\in I} \phi_j(x)\phi_j(y)$ and 
$$\text{MMD}^2(K_{\Phi},p,q)= \left|\left|\mathbb{E}_{x\sim p} \Phi(x)- \mathbb{E}_{x\sim q}\Phi(x) \right|\right|^2.$$
For $\text{MMD}^2(K_{\Phi},p,q)$ to be a metric it is enough to have the set features $S$  be universal (by Thm.\ref{theo:mmd} and Thm.~7 in \cite{micchelli:2006_uk}). 
Prop.~\ref{pro:UnivFeatures} gives conditions for  GFMN convergence:
\begin{proposition}
 Assume $p_{data}$ belongs to the family defined by the generator $\{q_{\theta}\}_{\theta}$. GFMN converges to  the real distribution by matching in a feature space $S=\{\phi_j, j \in I \}$, where I is a countable set, if the features set S is universal (informally means that any continuous functions can be written as linear combination in the span of $S$) .
 \label{pro:UnivFeatures}
\end{proposition}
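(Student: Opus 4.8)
The strategy is to recognize the GFMN population objective as a squared MMD associated with the kernel $K_\Phi$ built from the perceptual feature set, and then to chain the two results already in hand: Theorem~\ref{theo:mmd} (an MMD with a universal kernel vanishes only when the two distributions coincide) together with Theorem~7 of \cite{micchelli:2006_uk} (the kernel $K_\Phi$ is universal whenever the feature set $S$ is universal). First I would make the identification explicit. Writing $\Phi(x)=\{\phi_j(x)\}_{j\in I}$ and $K_\Phi(x,y)=\langle\Phi(x),\Phi(y)\rangle=\sum_{j\in I}\phi_j(x)\phi_j(y)$, the mean-matching part of the loss in Eq.~\eqref{eq:lossG} equals $\lVert \mathbb{E}_{x\sim p_{data}}\Phi(x)-\mathbb{E}_{x\sim q_\theta}\Phi(x)\rVert^2=\text{MMD}^2(K_\Phi,p_{data},q_\theta)$. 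To absorb the covariance terms as well, I would augment the feature set to $S'=\{\phi_j\}_{j\in I}\cup\{\phi_j^2\}_{j\in I}$ with feature map $\Phi'$; since the coordinatewise map $(\mu,\sigma)\mapsto(\mu,\sigma+\mu^2)$ relating $(\mu^{j,\ell},\sigma^{j,\ell})$ to $(\mathbb{E}[E_{j,\ell}],\mathbb{E}[E_{j,\ell}^2])$ is a bijection, the full GFMN loss vanishes if and only if $\text{MMD}^2(K_{\Phi'},p_{data},q_\theta)=\lVert \mathbb{E}_{p_{data}}\Phi'(x)-\mathbb{E}_{q_\theta}\Phi'(x)\rVert^2=0$.

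Next I would invoke universality. Assuming $S$ (hence, a fortiori, the superset $S'$, whose span is larger) is universal, Theorem~7 of \cite{micchelli:2006_uk} gives that $K_{\Phi'}$ is a universal kernel. Theorem~\ref{theo:mmd} then yields $\text{MMD}^2(K_{\Phi'},p,q)=0$ iff $p=q$, so the GFMN population objective, being nonnegative, equals zero exactly on $\{\theta : q_\theta=p_{data}\}$. By the standing assumption $p_{data}\in\{q_\theta\}_\theta$ there is $\theta^*$ with $q_{\theta^*}=p_{data}$, so the infimum is attained and is $0$, and every minimizer $\theta$ satisfies $q_\theta=p_{data}$. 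This is the precise sense in which GFMN "converges" to the data distribution: its global optimum is characterized by $q_\theta=p_{data}$.

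The delicate points lie in the hypotheses rather than in the deduction. The main one is that both Theorem~\ref{theo:mmd} and Theorem~7 of \cite{micchelli:2006_uk} carry mild regularity requirements (a compact input space $\mathcal{X}$, or a continuous bounded $\Phi$, and continuous features), so I would fold these into the universality assumption and note that the fixed convolutional feature maps of a pretrained network, restricted to the bounded image domain, meet them. A second caveat worth a remark: the proposition is stated for a countable index set $I$, whereas in practice $I$ is finite and a finite feature family is generally not universal, so the result should be read as an idealization (or one appeals to density of finite-layer features as network width/depth grows). Finally, I would stress that the claim concerns the global minimizer of the population loss, not the AMA/SGD trajectory: it asserts that this minimizer recovers $p_{data}$, not that gradient-based training necessarily reaches it.
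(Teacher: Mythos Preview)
Your argument is correct and matches the paper's: universality of $S$ gives universality of $K_\Phi$ by Theorem~7 of \cite{micchelli:2006_uk}, whence Theorem~\ref{theo:mmd} makes $\text{MMD}(K_\Phi,\cdot,\cdot)$ a metric, and realizability forces the infimum to be attained exactly at $q_\theta=p_{data}$. The paper's proof is a three-line version of this, with the covariance extension relegated to a separate remark (using $\{\phi_j,\phi_j\phi_k\}$ rather than your diagonal $\{\phi_j,\phi_j^2\}$); your additional caveats on regularity, finite $I$, and population-versus-trajectory convergence are apt but not present in the paper.
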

\begin{proof}
 $S$ is universal $\implies k_{\Phi}$ is universal \cite{micchelli:2006_uk}. Hence $\text{MMD}(k_{\Phi},p_{data},q_{\theta})\!=\!0$ iff $q_{\theta}\!=\!p_{data}$. 
 GFMN solves $\inf_{\theta}\text{MMD}^2(K_{\Phi},p_{data},q_{\theta})$, and the infimum is achieved  for $\theta$ such that $q_{\theta}=p_{data}$ (  $p_{data} \in\{q_{\theta}\}_{\theta}$ ).
 \vskip -0.2in
\end{proof}

\begin{remark}The analysis covers here mean matching but the same applies to covariance matching considering $S=\{\phi_j,\phi_j\phi_k, j,k \in I\}$.
\end{remark}

\noindent \textbf{Universality of Perceptual Features in Computer Vision.} From Prop. \ref{pro:UnivFeatures} we see that for GFMN to be convergent with pretrained feature extractors $E_j$ that are \textbf{perceptual features} (such as features from VGG or ResNet pretrained on ImageNet), we need to assume universality of those features in the image domain. 
We know from transfer learning that features from ImageNet pretrained VGG/ResNet can express any functions for a downstream task by finding a linear weight in their span. Note that this is the definition of universal feature as given in \cite{micchelli:2006_uk}: continuous functions can be approximated in the linear span of those features.
Hence, assuming universality of PFs defined by ImageNet pretrained VGG or ResNet, GFMN is guaranteed to converge to the data distribution by Prop.~\ref{pro:UnivFeatures}. 
Our results complement the common wisdom on ``universality'' of PFs in  transfer learning and style transfer by showing that they are sufficient for learning implicit generative models.

\section{Related work}
\label{sec:relatedwork}


GFMN is related to the recent body of work on MMD and  moment matching based generative models \cite{li:2015:GMMN,dziugaite:2015MMD_net,li:nips2017:MMDGAN,binkowski2018dMMDGAN,RavuriMRV18}.
The closest to our method is the Generative Moment Matching Network + Autoencoder (GMMN+AE) proposed in \cite{li:2015:GMMN}.
In GMMN+AE,
the objective is to train a generator $G$ that maps from a prior uniform distribution to the latent code learned by a pretrained AE, and then uses the frozen pretrained decoder to map back to image space.
As discussed in Section \ref{sec:theory} one key difference in our approach is that,
while GMMN+AE uses a Gaussian kernel to perform moment matching using the AE low dimensional latent code, GFMN performs mean and covariance matching in a PF space induced by a non-linear kernel function (a DCNN) that is orders of magnitude larger than the AE latent code, and that we argued is universal in the image domain.

Li \etal~\cite{li:nips2017:MMDGAN} demonstrate that GMMN+AE is not competitive with GANs for challenging datasets such as CIFAR10. MMD-GANs,  discussed in Section \ref{sec:theory},
 demonstrated competitive results with the use of adversarial learning by learning a feature map in conjuction with a Gaussian kernel  \cite{li:nips2017:MMDGAN,binkowski2018dMMDGAN}.
Finally,
Ravuri \etal~\cite{RavuriMRV18} recently proposed a method to perform online learning of the moments while training the generator.
Our proposed method differs by using fixed pretrained PF extractors for moment matching.

Bojanowski \etal~\cite{bojanowski2018optimizing} proposed the Generative Latent Optimization (GLO) model that jointly optimizes model parameters and noise input vectors $z$,
while avoiding adversarial training.
Our work relates also to plug and play generative models of \cite{NguyenCBDY17} where a pretrained classifier is used to sample new images, using MCMC sampling methods. 

Our work is also related to AE-based generative models variational AE (VAE) \cite{kingma2013auto}, adversarial AE (AAE) \cite{makhzani2016} and Wasserstein AE (WAE) \cite{tolstikhin2018ICLR}.
However, GFMN is quite distinct from these methods because it uses  pretrained AEs to play the role of feature extractors only, while these methods aim to impose a prior distrib. on the latent space of AEs.
Another recent line of work that involves the use of AEs in generative models consists in applying AEs to improve GANs stability \cite{zhaoICLR2017,warde2017iclr}.
Finally, our objective function is related to the McGan loss function \cite{mroueh17_mcgan}, where authors match first and second order moments.



\vskip -0.2in
\section{Experiments}
\label{sec:experiments}
\vskip -0.07in
\subsection{Experimental Setup}
\vskip -0.07in
\noindent \textbf{Datasets:}
We evaluate our proposed approach on images from CIFAR10 \cite{cifar10} (50k train., 10k test, 10 classes), STL10 \cite{stl10} (5k train., 8k test, 100k unlabeled, 10 classes), CelebA \cite{celeba} (200k) and LSUN bedrooms \cite{yu15lsun} datasets.
STL10 images are rescaled to 32$\times$32,
while CelebA and LSUN images are rescaled to either 64$\times$64 or 128$\times$128, 
depending on the experiment.
CelebA images are center-cropped to 160$\times$160 before rescaling.

\noindent \textbf{GFMN Generator:}
In most of our experiments the generator $G$ uses a DCGAN-like architecture \cite{radfordMC15}.
For CIFAR10, STL10, LSUN and CelebA$_{64\times64}$,
we use two extra layers as commonly used in previous works \cite{mrouehS17fishergan,GulrajaniWGANGP17}.
For CelebA$_{128\times128}$ and some experiments with CIFAR10 and STL10,
we use a ResNet-based generator such as the one in \cite{GulrajaniWGANGP17}.
Architecture details are in the supplementary material.

\noindent \textbf{Autoencoder Features}:
For most AE experiments, 
we use an encoder network whose architecture is similar to the discriminator in DCGAN (strided convolutions).
We use batch normalization and ReLU non-linearity after each convolution.
We set the latent code size to 128, 128, and 512 for CIFAR10, STL10 and CelebA, respectively.
To perform feature extraction,
we get the output of each ReLU in the network.
Additionally,
we also perform some experiments where the encoder uses a VGG19 architecture.
The decoder network $D$ uses a network architecture similar to our generator $G$.
More details in the supplementary material.

\noindent \textbf{Classifier Features:}
We perform our experiments on classifier features with VGG19 \cite{Simonyan14c} and Resnet18 networks \cite{He2016} which we pretrained using the whole ImageNet dataset \cite{russakovsky:2015} with 1000 classes.
Pretrained ImageNet classifiers details can be found in the supplementary material.

\vskip -0.01in
\noindent \textbf{GFMN Training:}
GFMNs are trained with an ADAM optimizer; most hyperparameters are kept fixed across datasets. We use $n_z\!=\!100$ and minibatch of 64. Dataset dependent learning rates are used for updating $G$ ($10^{-4}$ or $5\!\times\!10^{-5}$) and AMA ($5\!\times\!10^{-5}$ or $10^{-5}$).
We use AMA moving average (Sec. \ref{sec:adam_mov_avrg}) in all reported experiments.

\begin{table*}[ht!]
\small
  \caption{CIFAR10 results for GFMN with different feature extractors.}
  \label{tab:is_fid_gfmn}
  \centering
  \begin{tabular}{lcccccc}
    \hline
    \bf $E$ Type & \bf $E$ Arch. & \bf Pre-trained On & \bf \# features & \bf $G$ Arch.  & \bf IS & \bf FID (5K/50K) \\
    \hline
    Encoder & DCGAN & CIFAR10 & 60K & DCGAN & 4.51  $\pm$ 0.06 & 82.8 / 78.3  \\
    Encoder & VGG19 & ImageNet & 296K & DCGAN & 4.95 $\pm$ 0.06 & 61.6 / 57.2 \\
    Classifier & Resnet18 & ImageNet & 544K  & DCGAN & 7.92 $\pm$ 0.10 & 29.1 / 24.3 \\
    Classifier & VGG19  & ImageNet  & 296K & DCGAN & 7.88 $\pm$ 0.08 & 25.5 / 20.8   \\
    Classifier & VGG19 + Resnet18  & ImageNet & 832K & DCGAN & 8.08 $\pm$ 0.08 & 25.5 / 20.9 \\
    Classifier & VGG19 + Resnet18  & ImageNet & 832K & Resnet & \bf 8.27 $\pm$ 0.09 & \bf 18.1 / 13.5 \\
    \hline
  \end{tabular}
\end{table*}

\begin{figure*}[!ht]
    \centering
    \begin{subfigure}[b]{0.27\textwidth}
        \includegraphics[width=\textwidth]  {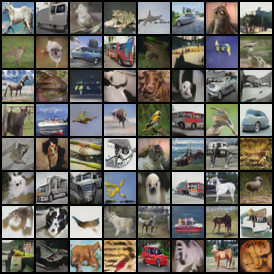}
        \caption{CIFAR10 }
        \label{fig:cifar_vgg_resnet}
    \end{subfigure}
    \begin{subfigure}[b]{0.27\textwidth}
        \includegraphics[width=\textwidth] {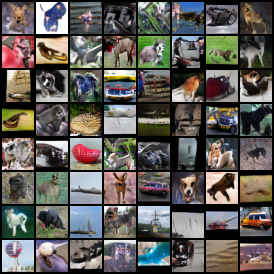}
        \caption{STL10}
        \label{fig:stl_vgg_resnet}
    \end{subfigure}
    \begin{subfigure}[b]{0.27\textwidth}
        \includegraphics[width=\textwidth]  {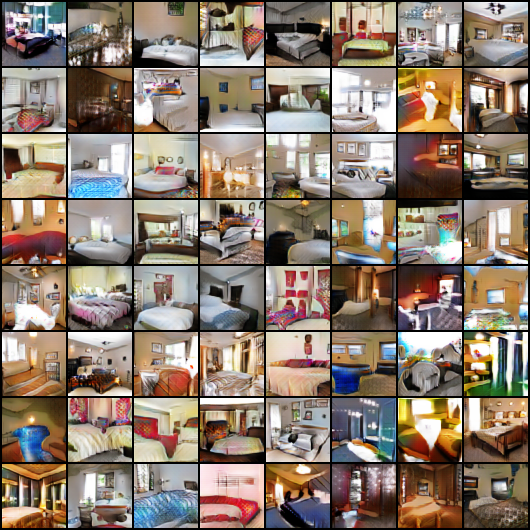}
        \caption{LSUN}
        \label{fig:lsun_vgg}
    \end{subfigure}
\caption{Generated samples from GFMN that uses as feature extractor VGG-19+Resnet18 (\ref{fig:cifar_vgg_resnet}, \ref{fig:stl_vgg_resnet}) and VGG-19 net (\ref{fig:lsun_vgg}).}
    \label{fig:gfmn-class}
\end{figure*}

\subsection{Autoencoder Features vs. (Cross-domain) Classifier Features}
\label{sec:ae_features}
This section presents a comparative study on the use of pretrained autoencoders and cross-domain classifiers as feature extractors in GFMN.
\Tab{tab:is_fid_gfmn} shows the Inception Score (IS) \cite{salimans2016improved} and Fr\'{e}chet Inception Distance (FID) \cite{heuselRUNKH17FID} for GFMN trained on CIFAR10 using different feature extractors $E$.
The two first rows in \Tab{tab:is_fid_gfmn} correspond to GFMN models that use pretrained encoders as $E$,
while the last four rows use pretrained VGG19/Resnet18 ImageNet classifiers.
We can see in \Tab{tab:is_fid_gfmn} that there is a large boost in performance when ImageNet classifiers are used as feature extractors instead of encoders.
Despite the classifiers being trained on a different domain (ImageNet vs. CIFAR10), 
the classifier features are significantly more effective.
While the best IS with encoders is 4.95,
the lowest IS with ImageNet classifier is 7.88.
Additionally,
when using simultaneously VGG19 and Resnet18 as feature extractors (two last rows), which increases the number of features to 832K,
we get even better performance.
Finally,
we achieve the best performance in terms of both IS and FID (last row\footnote{Average result of five runs with different random seeds.}) when using a generator architecture that contains residual blocks, similar to the one propose in \cite{GulrajaniWGANGP17}.

Random samples from GFMN$^{\text{VGG19+Resnet18}}$ trained with CIFAR10 and STL10 are shown in Figs.~\ref{fig:cifar_vgg_resnet} and \ref{fig:stl_vgg_resnet} respectively.
Fig.~\ref{fig:lsun_vgg} shows random samples from GFMN$^{\text{VGG19}}$ trained with LSUN bedrooms dataset (resolution $64\!\times\!64$).
Fig. \ref{fig:celeba_sz128} presents samples from GFMN$^{\text{VGG19}}$ trained with CelebA dataset with resolution $128\!\times\!128$, 
which shows that GFMN can achieve good performance with image resolutions larger than $32\!\times\!32$.
These results also demonstrate that: (1) the same classifier (VGG19 trained on ImageNet) can be successfully applied to train GFMN models across different domains; (2) perceptual features from DCNNs encapsulate enough statistics to allow the learning of good generative models through moment matching.



\begin{figure}[!h]
\centering
    \includegraphics[width=0.27\textwidth,]{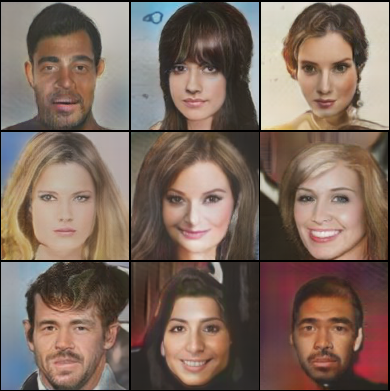}
    \caption{Samples from GFMN$^{\text{VGG19}}$ trained on CelebA with image size $128\times128$.}
    \label{fig:celeba_sz128}
\end{figure}

\Tab{tab:is_fid_numlayers} shows IS and FID for increasing number of layers (i.e. number of features) in our extractor VGG19. 
We select up to 16 layers, excluding the output of fully connected layers. 
Using more layers dramatically improves the performance of the feature extractor, reaching IS and FID peak performance when the maximum number of layers is used.
Note that the features are ReLU activation outputs, meaning the encodings may be quite sparse.
In Appendix 7 we show qualitative results that corroborate these results.


To verify whether the number of features is the main factor for performance,
we conducted an experiment where we train an AE with an encoder using a VGG19 architecture.
This encoder is pretrained on ImageNet and produces a total of 296K features.
The second row in \Tab{tab:is_fid_gfmn} shows the results for this experiment.
Although there is improvement in both IS and FID compared to the DCGAN encoder (first row),
the boost is not comparable to the one obtained with a VGG19 classifier.
In other words, features from classifiers are significantly more informative than AEs features for the purpose of training generators by feature matching.

\begin{table}[!ht]
\small
  \caption{Impact of the number of layers/features used for feature matching in GFMN (1K=$2^{10}$).}
  \label{tab:is_fid_numlayers}
  \centering
  \begin{tabular}{lccc}
    \hline
     \bf \# layers    & \bf \# features & IS & FID (5K / 50K) \\
    \hline
    1     & 64K  & 4.68 $\pm$ 0.05 & 118.6 / 114.8\\
    3     & 160K & 5.59 $\pm$ 0.08 & 83.2 / 78.2\\
    5     & 208K & 6.12 $\pm$ 0.05 & 53.8 / 49.3\\
    7     & 240K & 6.99 $\pm$ 0.06 & 39.4 / 34.9 \\
    9     & 264K & 7.26 $\pm$ 0.06 & 32.3 / 27.7 \\
    11    & 280K & 7.72 $\pm$ 0.08 & 29.6 / 25.0 \\
    13    & 290K & 7.49 $\pm$ 0.09 & 29.2 / 24.8 \\
    15    & 294K & 7.62 $\pm$ 0.04 & 27.6 / 22.7 \\
    16    & 296K & \bf 7.88 $\pm$ 0.08 & \bf 25.5 / 20.8 \\
    \hline
  \end{tabular}
\vskip -0.1in
\end{table}

\subsection{AMA and Training Stability} \label{sec:StabilityAdamMovingAverage}

\begin{figure*}[!ht]
    \centering
    \begin{subfigure}[b]{0.27\textwidth}
        \includegraphics[width=\textwidth]  {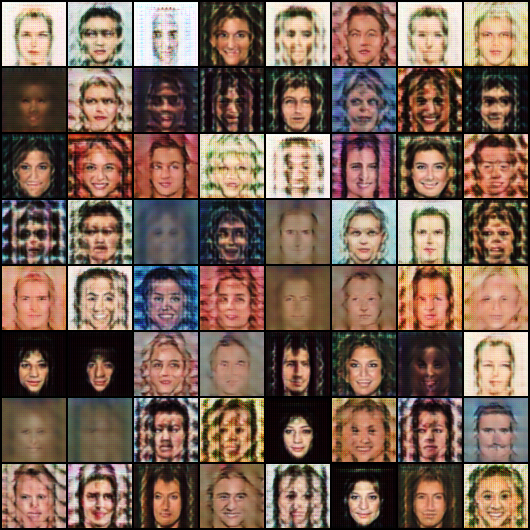}
        \caption{MA - mbs 64}
        \label{fig:celeba_ae_ma_bs64}
    \end{subfigure}
    \begin{subfigure}[b]{0.27\textwidth}
        \includegraphics[width=\textwidth]  {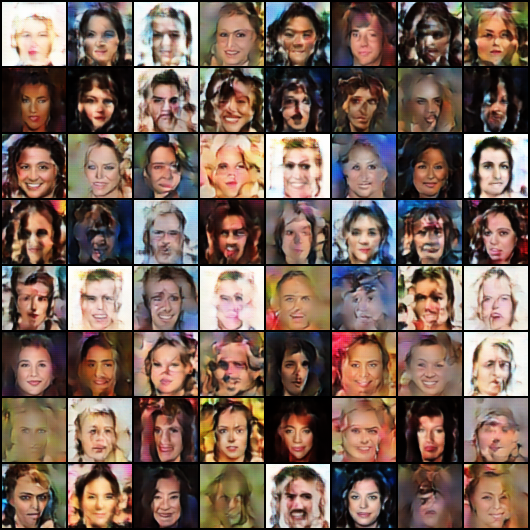}
        \caption{MA - mbs 512}
        \label{fig:celeba_ae_ma_bs512}
    \end{subfigure}
    \begin{subfigure}[b]{0.27\textwidth}
        \includegraphics[width=\textwidth]  {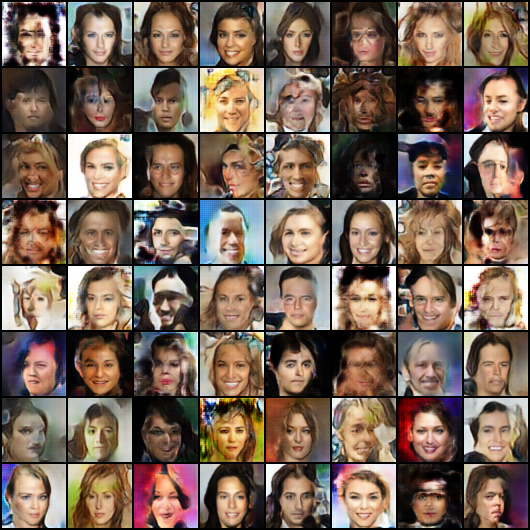}
        \caption{AMA - mbs 64}
        \label{fig:celeba_ae_ama_bs64}
    \end{subfigure}
\caption{Generated images from GFMN trained with either simple Moving Average (MA) (\ref{fig:celeba_ae_ma_bs64} and \ref{fig:celeba_ae_ma_bs512}) or Adam Moving Average (AMA) (\ref{fig:celeba_ae_ama_bs64}), and various minibatch sizes (mbs). While small minibatch sizes have a big negative effect for MA, it is not an issue for AMA.}
\label{fig:celeba_diff_batchsize}
\vskip -0.15in
\end{figure*}

This section presents experimental results that evidence the advantage of our proposed ADAM moving average (AMA) over the simple moving average (MA).
The main benefit of AMA is the promotion of stable training when using small minibatches.
The ability to train with small minibatches is \emph{essential} due to GFMN's need for large number of features from DCNNs, 
which becomes a challenge in terms of GPU memory usage.
Our Pytorch \cite{paszke2017automatic} implementation of GFMN can only handle minibatches of size up to 160 when using VGG19 as a feature extractor and image size $64\!\times\!64$ on a Tesla K40 GPU w/ 12GB of memory.
A more optimized implementation minimizing memory overhead could, in principle, handle somewhat larger minibatch sizes (as could a more recent Tesla V100 w/ 16 GB). 
However, increase image size or feature extractor size and the memory footprint increases quickly. We will always run out of memory when using larger minibatches, regardless of implementation or hardware.

All experiments in this section use CelebA training set, and a feature extractor using the encoder from an AE following a DCGAN-like architecture.
This feature extractor is smaller than VGG19/Resnet18 allowing for minibatches of size up to 512 for image size $64\!\times\!64$.  
Fig.~\ref{fig:celeba_diff_batchsize} shows generated images from GFMN trained with either MA or our proposed AMA.
For MA, generated images from GFMN trained with 64 and 512 minibatch size are presented in Figs. \ref{fig:celeba_ae_ma_bs64} and \ref{fig:celeba_ae_ma_bs512} respectively.
For AMA, Fig. \ref{fig:celeba_ae_ama_bs64} shows results for minibatch size 64.
In MA training, the minibatch size has a \emph{tremendous} impact on the quality of generated images:
with minibatches smaller than 512, almost all images generated are quite distorted.
On the other hand,
when using AMA, 
GFMN generates much better images with minibatch size 64 (Fig. \ref{fig:celeba_ae_ama_bs64}).
For AMA,
increasing the minibatch size from 64 to 512 does not improve the quality of generated images for the given dataset and feature extractor.
In the supplementary material,
we show a comparison between MA and AMA with VGG19 ImageNet classifier as feature extractor for a minibatch size of 64.
AMA also displays a very positive effect on the quality of generated images when a stronger feature extractor is used.
An alternative for training with larger minibatches would be the use of multi-GPU, multi-node setups.
However, performing large scale experiments is beyond the scope of the current work.
Moreover, many practitioners do not have access to a GPU cluster, and the development of methods that can also work on a single GPU with small memory footprint is essential.
\begin{figure}[!ht]
\centering
    \includegraphics[width=0.32\textwidth,]{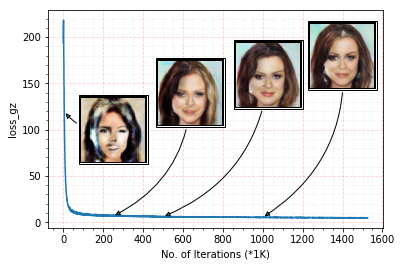}
    \caption{Loss as a function of training epochs with example of generated faces.}
    \label{fig:loss}
\vskip -0.1in
\end{figure}

An important advantage of GFMN over adversarial methods is its training stability.
\Fig{fig:loss} shows the evolution of the generator loss per epoch and generated examples when using AMA.
There is a clear correlation between the quality of generated images and the loss. 
Moreover, mode collapsing was not observed in our experiments with AMA.

\subsection{Comparison to the State-of-the-art}
\begin{table*}[!ht]
\small
  \caption{Inception Score and FID of different generative models for CIFAR10 and STL10.}
  \label{tab:is_fid_sota}
  \centering
  \begin{tabular}{lcccc}
    \hline
    \bf Model & \multicolumn{2}{c}{\bf CIFAR 10} & \multicolumn{2}{c}{\bf STL 10} \\
        & \bf IS & \bf FID (5K / 50K) & \bf IS & \bf FID (5K / 50K) \\
    \hline
    Real data &  11.24$\pm$.12 & 7.8 / 3.2 & 26.08$\pm$.26 & 8.08 / 4.0 \\
    \hline
    \multicolumn{5}{c}{\bf No Adversarial Training}\\
    \hline
    GMMN    \cite{li:nips2017:MMDGAN}  & 3.47$\pm$.03 &  \\
    GMMN+AE \cite{li:nips2017:MMDGAN}  & 3.94$\pm$.04 &  \\
    (ours) GFMN$^{\text{VGG+Resnet}}$ & 8.08 $\pm$ 0.08 & 25.5 / 20.9  & 8.57 $\pm$ 0.08 & 34.2 / 17.2 \\
    (ours) GFMN$^{\text{VGG+Resnet}}$ (Resnet G) & \bf 8.27 $\pm$ 0.09 & \bf 18.1 / 13.5  & \bf 9.12 $\pm$ 0.09 & \bf 31.6 / 13.9 \\
    \hline
    \multicolumn{5}{c}{\bf Adversarial Training \& Online Moment Learning Methods (Unsupervised)}\\
    \hline
    MMD GAN \cite{li:nips2017:MMDGAN} & 6.17$\pm$.07 &  \\
    MMD$_{rq}$ GAN \cite{binkowski2018dMMDGAN} & 6.51$\pm$.03 & 39.9 / -\\
    WGAN-GP \cite{miyato2018spectral} & 6.68$\pm$.06 & 40.2 / -  & 8.42$\pm$.13 & 55.1 / -\\
    SN-GANs \cite{miyato2018spectral} & 7.58$\pm$.12 & 25.5 / - & 8.79$\pm$.14 & 43.2 / -  \\
    MoLM-1024 \cite{RavuriMRV18} & 7.55$\pm$.08 & 25.0 / 20.3 & \\
    GAN-DFM \cite{warde2017iclr}  & 7.72$\pm$.13 \\
    MoLM-1536 \cite{RavuriMRV18} & 7.90$\pm$.10 & 23.3 / 18.9 & \\
    \hline
    \multicolumn{5}{c}{\bf Adversarial Training (Supervised)}\\
    \hline
    Impr. GAN \cite{salimans2016improved} & 8.09$\pm$.07 &  \\
    FisherGAN (Resnet G) \cite{mrouehS17fishergan} & 8.16$\pm$.12 &\\
    WGAN-GP (Resnet G) \cite{GulrajaniWGANGP17} & 8.42$\pm$.10 &  \\
    \hline
  \end{tabular}
\vskip -0.15in
\end{table*}

In \Tab{tab:is_fid_sota},
we compare GFMN results with different adversarial and non-adversarial approaches for CIFAR10 and STL10.
In the middle part of the table,
we report results for recent unsupervised models that use a DCGAN-like architecture in the generator.
Despite using a frozen cross-domain feature extractor,
GFMN outperforms the unsupervised systems in IS and FID for both datasets.
The bottom part of \Tab{tab:is_fid_sota} includes results for supervised approaches. Some of these models use a Resnet architecture in the generator as indicated in parenthesis.
Note that GAN-based methods that perform conditional generation use direct feedback from the labels in the form of log likelihoods from the discriminator (e.g. using the $k\!+\!1$ trick from \cite{salimans2016improved}). 
In contrast, our generator is trained with a loss function that \emph{only} performs feature matching. Our generator is agnostic to the labels and there is no feedback in the form of a log likelihood from the labeled data.
Despite that,
GFMN produces results that are at the same level of supervised GAN models that use labels from the target dataset.


We performed additional experiments with a WGAN-GP architecture where: 
(1) the discriminator is a VGG19 or a Resnet18; 
(2) the discriminator is pretrained on ImageNet.
The goal was to evaluate if WGAN-GP can benefit from DCNN classifiers pretrained on ImageNet.
Although we tried different hyperparameter combinations,
we were not able to successfully train WGAN-GP with VGG19 or Resnet18 discriminators (details in Appendix 8).

\section{Discussion \& Concluding Remarks}
\label{sec:conclusion}
We achieve successful non-adversarial training of implicit generative models by introducing different key ingredients:
(1) moment matching on perceptual features from all layers of pretrained neural networks;
(2) a more robust way to compute the moving average of the mean features by using ADAM optimizer, which allows us to use small minibatches;
and (3) the use of perceptual features from multiple neural networks at the same time (VGG19 + Resnet18).

Our quantitative results in Tab.~\ref{tab:is_fid_sota} show that GFMN achieves better or similar results compared to the state-of-the-art Spectral GAN (SN-GAN) \cite{miyato2018spectral} for both CIFAR10 and STL10.
This is an impressive result for a non-adversarial feature matching-based approach that uses pretrained cross-domain feature extractors and has stable training.
When compared to MMD approaches \cite{li:2015:GMMN,dziugaite:2015MMD_net,li:nips2017:MMDGAN,binkowski2018dMMDGAN,RavuriMRV18},
GFMN presents important distinctions (some of them already listed in Secs. \ref{sec:theory} and \ref{sec:relatedwork}) which make it an attractive alternative.
Compared to GMMN and GMMN+AE \cite{li:2015:GMMN},
we can see in Tab.~\ref{tab:is_fid_sota} that GFMN achieves far better results.
In the supplementary material,
we also show a qualitative comparison between GFMN and GMMN results.
Compared to recent adversarial MMD methods (MMD GAN) \cite{li:nips2017:MMDGAN,binkowski2018dMMDGAN}
GFMN also presents significantly better results while avoiding the problematic min/max game.
GFMN achieves better results than the Method of Learned Moments (MoLM)
\cite{RavuriMRV18}, 
while using a much smaller number of features to perform matching.
The best performing model from \cite{RavuriMRV18}, MoLM-1536, uses around 42 million moments to train the CIFAR10 generator,
while our best GFMN model uses around 850K moments/features only,
almost 50x less. 

One may argue that the best GFMN results are obtained with feature extractors trained with classifiers.
However, there are two important points to note:
(1) we use a cross domain feature extractor and do not use labels from the target datasets (CIFAR10, STL10, LSUN, CelebA);
(2) classifier accuracy does not seem to be the most important factor for generating good features: VGG19 classifier produces features as good as the ones from Resnet18, although the former is less accurate (more details in supplementary material). We are confident that GFMN can achieve state-of-the-art results with features from classifiers trained with unsupervised methods such as \cite{caron2018deepclustering}.

In conclusion, this work presents important theoretical and practical contributions that shed light on the effectiveness of perceptual features for training implicit generative models through moment matching.

{\small
\bibliographystyle{ieee}
\bibliography{gfmn_iccv2019}
}
\clearpage
\section{Appendix}

\subsection{Continuation of Universality of PFs and GFMN Convergence}
We summarize here the main definitions and theorems from \cite{micchelli:2006_uk} regarding universality of kernels and feature maps.\\

\noindent \textbf{Universal Kernels.} The following defines a universal kernel
\begin{definition}[Universal Kernel]
Given a kernel $K$ defined on $\mathcal{X}\times \mathcal{X}$. Let $\mathcal{Z}$ be any compact subset of $\mathcal{X}$. Define the space of kernel sections:
$$\mathcal{K}(\mathcal{Z})= \overline{\text{span}}\{K_y, y \in \mathcal{Z}\},$$
where $K_y: \mathcal{X}\to \mathbb{R}$, $K_{y}(x)=K(x,y)$.
Let $\mathcal{C}(Z)$ be the space of all continuous real valued functions defined on $\mathcal{Z}$.
A kernel is said \textbf{universal} if for any choice of $\mathcal{Z}$ (compact subset of $\mathcal{X}$) $\mathcal{K}(\mathcal{Z})$ is dense in $\mathcal{C}(\mathcal{Z})$. 
\end{definition}
In other words a kernel is universal if $\mathcal{C}(\mathcal{Z})= K(\mathcal{Z}).$  Meaning if  any continuous function can be expressed in the span of $K_{y}$.\\

\noindent \textbf{Universal Feature Maps.}We turn now for kernels defined by feature maps and how to characterize their universality. Consider a continuous feature map $\Phi: \mathcal{X}\to \mathcal{W}$, where $(\mathcal{W},\langle, \rangle_{\mathcal{W}})$ is a Hilbert space; the kernel $K$ has the following form:
\begin{equation}
K(x,y)=\langle \Phi(x),\Phi(y)\rangle_{\mathcal{W}}.
\label{eq:KernelPhi}
\end{equation}
Let $\mathcal{Y}$ be an orthonormal basis of $\mathcal{W}$ define the following continuous function $F_{y} \in \mathcal{C}(\mathcal{Z})$ defined at $x\in \mathcal{Z}$:
$$F_{y}(x)=\langle\Phi(x),y\rangle_{\mathcal{W}},$$
and let:
\vskip -0.3in 
$$\Phi(\mathcal{Y})= \overline{\text{span}}\{F_{y}, y\in \mathcal{Y}\}$$
\begin{definition}[Universal feature Map]
A feature map is universal if $\Phi(\mathcal{Y})$ is dense in $\mathcal{C}(\mathcal{Z})$, for all $\mathcal{Z}$ compact subsets of $\mathcal{X}$.i.e  A feature map is universal if $\Phi(\mathcal{Y})= \mathcal{C}(\mathcal{Z})$.
\end{definition}

The following Theorem shows the relation between universality of a kernel defined by feature map and the universality of the feature map:
\begin{theorem}[\cite{micchelli:2006_uk}, Thm 4, Relation between $\mathcal{K}(\mathcal{Z})$ and $\Phi(\mathcal{Y})$ ] For kernel defined by feature maps in \eqref{eq:KernelPhi} we have $\mathcal{K}(\mathcal{Z})=\Phi(\mathcal{Y})$. 
A kernel of form \eqref{eq:KernelPhi} is universal if and only if its feature map is universal.
\end{theorem}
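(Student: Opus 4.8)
The plan is to establish the stronger equality $\mathcal{K}(\mathcal{Z})=\Phi(\mathcal{Y})$ of subspaces of $\mathcal{C}(\mathcal{Z})$, for each fixed compact $\mathcal{Z}\subseteq\mathcal{X}$; granting this, the universality equivalence is immediate, since ``$K$ universal'' and ``$\Phi$ universal'' both assert precisely that the corresponding subspace is dense in $\mathcal{C}(\mathcal{Z})$ for all compact $\mathcal{Z}$, and these subspaces coincide.

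First I would introduce the bounded linear operator $T:\mathcal{W}\to\mathcal{C}(\mathcal{Z})$ defined by $(Tv)(x)=\langle\Phi(x),v\rangle_{\mathcal{W}}$. Because $\Phi$ is continuous and $\mathcal{Z}$ is compact, $M:=\sup_{x\in\mathcal{Z}}\|\Phi(x)\|_{\mathcal{W}}$ is finite, hence $\|Tv\|_{\infty}\le M\|v\|_{\mathcal{W}}$ and $T$ is bounded. Directly from the definitions above, $K_y=T\Phi(y)$ for $y\in\mathcal{Z}$ and $F_w=Tw$ for $w\in\mathcal{Y}$. Therefore $\operatorname{span}\{F_w:w\in\mathcal{Y}\}=T(\mathcal{W}_0)$, where $\mathcal{W}_0:=\operatorname{span}(\mathcal{Y})$ is dense in $\mathcal{W}$, and $\operatorname{span}\{K_y:y\in\mathcal{Z}\}=T(\mathcal{V}_0)$, where $\mathcal{V}_0:=\operatorname{span}\{\Phi(y):y\in\mathcal{Z}\}$ has closure $\mathcal{V}$ in $\mathcal{W}$. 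A one-line argument using only boundedness of $T$ (if $D$ is dense in a subspace $E\subseteq\mathcal{W}$ then $\overline{T(D)}=\overline{T(E)}$, closures in $\mathcal{C}(\mathcal{Z})$) then gives $\Phi(\mathcal{Y})=\overline{T(\mathcal{W})}$ and $\mathcal{K}(\mathcal{Z})=\overline{T(\mathcal{V})}$.

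The heart of the matter is the identity $T(\mathcal{W})=T(\mathcal{V})$. The inclusion $\supseteq$ is trivial; for $\subseteq$ I would use the orthogonal decomposition $\mathcal{W}=\mathcal{V}\oplus\mathcal{V}^{\perp}$ and observe that $\mathcal{V}^{\perp}\subseteq\ker T$: if $v\perp\mathcal{V}$ then $\langle\Phi(y),v\rangle_{\mathcal{W}}=0$ for every $y\in\mathcal{Z}$, i.e.\ $Tv$ vanishes identically on $\mathcal{Z}$, so $Tv=0$ in $\mathcal{C}(\mathcal{Z})$. Hence $T(\mathcal{W})=T(\mathcal{V})+T(\mathcal{V}^{\perp})=T(\mathcal{V})$, and passing to closures gives $\Phi(\mathcal{Y})=\overline{T(\mathcal{W})}=\overline{T(\mathcal{V})}=\mathcal{K}(\mathcal{Z})$, whence the claimed equivalence of the two notions of universality.

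I expect the only real obstacle to be recognizing this reduction: expressing both $\mathcal{K}(\mathcal{Z})$ and $\Phi(\mathcal{Y})$ as the closed range of the single bounded map $T$ applied to two subspaces of $\mathcal{W}$ that are dense in their respective closures. Once this is in place, the seemingly harder inclusion $\Phi(\mathcal{Y})\subseteq\mathcal{K}(\mathcal{Z})$ collapses to the tautology that whatever part of $\mathcal{W}$ the vectors $\{\Phi(y):y\in\mathcal{Z}\}$ fail to span is, by construction, annihilated by each evaluation functional $v\mapsto\langle\Phi(x),v\rangle$, $x\in\mathcal{Z}$. The remaining ingredients --- finiteness of $M$, interchanging $T$ with closures, and the orthogonal decomposition --- are routine given that $\mathcal{Z}$ is compact and $\Phi$ is continuous.
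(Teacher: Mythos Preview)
Your argument is correct and clean. The introduction of the bounded operator $T:\mathcal{W}\to\mathcal{C}(\mathcal{Z})$, $(Tv)(x)=\langle\Phi(x),v\rangle_{\mathcal{W}}$, and the observation that $\mathcal{V}^{\perp}\subseteq\ker T$ (so that $T(\mathcal{W})=T(\mathcal{V})$) is exactly the right reduction; the passage from dense subspaces to closures via continuity of $T$ is handled carefully.

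There is, however, nothing in the paper to compare against: the authors do not prove this theorem at all. It is stated in the appendix purely as a citation of Theorem~4 from Micchelli et al.\ \cite{micchelli:2006_uk}, with no accompanying argument, and is used only as a black box to justify Proposition~\ref{pro:UnivFeatures}. Your proposal therefore supplies a proof where the paper offers none. For what it is worth, your approach is essentially the one Micchelli et al.\ use in their original paper: they too work with the evaluation map $w\mapsto\langle\Phi(\cdot),w\rangle$ and exploit the orthogonal complement of $\overline{\operatorname{span}}\{\Phi(y):y\in\mathcal{Z}\}$ being in its kernel.
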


Hence the following Theorem $7$ from \cite{micchelli:2006_uk}:
\begin{theorem}[\cite{micchelli:2006_uk}] Let $S=\{\phi_j, j \in I\}$, where I is a countable set and $\phi_j: \mathcal{X}\to \mathbb{R}$ continuous function. Define the following kernel
$$ K(x,y)=\sum_{j\in I }\phi_j(x)\phi_j(y).$$
$K$ is universal if and only if the set of features $S$ is universal.
\end{theorem}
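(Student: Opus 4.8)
The plan is to exhibit $K$ as a kernel of the feature-map form \eqref{eq:KernelPhi} and then invoke the preceding theorem (Thm.~4 of \cite{micchelli:2006_uk}), which already equates universality of such a kernel with universality of its feature map. First I would take $\mathcal{W}=\ell^2(I)$, the Hilbert space of square-summable families indexed by the countable set $I$, and define the feature map $\Phi:\mathcal{X}\to\mathcal{W}$ by $\Phi(x)=\big(\phi_j(x)\big)_{j\in I}$. For this to be legitimate one needs $\sum_{j\in I}\phi_j(x)^2<\infty$ for every $x$ --- which is exactly the condition that makes $K(x,y)=\sum_{j\in I}\phi_j(x)\phi_j(y)$ well defined and finite --- together with the (locally uniform) summability needed for $\Phi$ to be continuous into $\ell^2(I)$ rather than merely coordinatewise continuous; this is the regularity implicit in the setting of \cite{micchelli:2006_uk}. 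Granting this, $K(x,y)=\langle\Phi(x),\Phi(y)\rangle_{\mathcal{W}}$ by the very definition of the inner product on $\ell^2(I)$, so $K$ is of the form \eqref{eq:KernelPhi}.

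Next I would compute $\Phi(\mathcal{Y})$ for the canonical orthonormal basis $\mathcal{Y}=\{e_j:j\in I\}$ of $\ell^2(I)$, where $e_j$ has a $1$ in slot $j$ and $0$ elsewhere (the object $\Phi(\mathcal{Y})$ being independent of the chosen basis, so this choice is harmless). For each $j$ the associated function is $F_{e_j}(x)=\langle\Phi(x),e_j\rangle_{\mathcal{W}}=\phi_j(x)$, which is continuous by hypothesis, hence $\Phi(\mathcal{Y})=\overline{\text{span}}\{\phi_j:j\in I\}=\overline{\text{span}}(S)$. By the definition of universality for feature maps, $\Phi$ is universal exactly when $\overline{\text{span}}(S)$ is dense in $\mathcal{C}(\mathcal{Z})$ for every compact $\mathcal{Z}\subseteq\mathcal{X}$, i.e.\ exactly when the feature set $S$ is universal.

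Finally I would chain the equivalences: by Thm.~4 of \cite{micchelli:2006_uk}, $K$ (a kernel of the form \eqref{eq:KernelPhi}) is universal if and only if $\Phi$ is universal, and by the previous paragraph $\Phi$ is universal if and only if $S$ is universal; combining the two gives the claim. I expect the only genuine obstacle to be the functional-analytic bookkeeping in the first step --- verifying that $\Phi$ actually takes values in $\ell^2(I)$ and is continuous as a map into it, and that the construction of $\Phi(\mathcal{Y})$ does not depend on the orthonormal basis --- since once $K$ has been put in feature-map form the conclusion is an immediate corollary of Thm.~4.
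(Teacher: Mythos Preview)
Your proposal is correct and follows exactly the route the paper implicitly takes: the paper does not give a standalone proof of this theorem but simply writes ``Hence the following Theorem 7 from \cite{micchelli:2006_uk}'' immediately after stating Thm.~4, indicating that the result is a direct specialization of Thm.~4 to the feature map $\Phi(x)=(\phi_j(x))_{j\in I}$ into $\ell^2(I)$. Your write-up supplies precisely the details behind that ``Hence'' --- identifying $\mathcal{W}=\ell^2(I)$, checking that $F_{e_j}=\phi_j$ so that $\Phi(\mathcal{Y})=\overline{\text{span}}(S)$, and chaining the equivalences --- and your caveats about square-summability and continuity of $\Phi$ into $\ell^2(I)$ are exactly the regularity assumptions implicit in \cite{micchelli:2006_uk}.
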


\subsection{Discussion of AMA versus MA }
As we already discussed the moving average of $v$ of the difference of features  means $$\Delta_t=\frac{1}{N}\sum_{i=1}^N E(x_i)-\frac{1}{N}\sum_{i=1}^N E(G(z_i,\theta_{t})) $$ between real and generated data at each time step $t$ in the gradient descent up to time $T$, can be seen as a gradient descent in an online setting on the following cost : 
$$f^*= \min_{v}\sum_{t=1}^{T}f_{t}(v)= \sum_{t=1}^T||v-\Delta_{t}||^2_2$$
Note that we are in the online setting since $\Delta_t$ is only known when $\theta_{t}$ of the generator is updated.  The sequence $v_{t}$ generated by MA (moving average) and by AMA (ADAM moving average) is the SGD updates and ADAM updates respectively applied to the cost function $f_{t}$. Hence we can bound the regret of the sequence $\{v^{\text{MA}}_{t}\}$ and $\{v^{\text{AMA}}_{t}\}$ using known results on SGD and ADAM.
Let $d$ be the dimension of the encoding $E$. For MA, using classic regret bounds for gradient descents we obtain:
\begin{align*}
R^{\text{MA}}_{T}&=\sum_{t=1}^{T}||v^{\text{MA}}_{t}-\Delta_{t}||^2_2-f^*\leq O(\sqrt{dT}).
\end{align*}
For AMA, using ADAM regrets bounds from (Reddi et al., 2018). Let us define
\begin{align*}
R^{\text{AMA}}_{T}&=\sum_{t=1}^{T}||v^{\text{AMA}}_{t}-\Delta_{t}||^2_2-f^*.
\end{align*}
We have:
\begin{multline*}
R^{\text{AMA}}_{T} \leq  O(\sqrt{T}\sum_{i=1}^d \hat{u}^{T,\frac{1}{2}}_{i})+ \cdots \\
    O\left(\sum_{i=1}^d \sqrt{\sum_{t=1}^T(\Delta_{t,i}-v^{\text{AMA}}_{t,i})^2}\right) + C
\end{multline*}
where $\hat{u}$ are defined in the  ADAM updates as moving averages of second order moments of the gradients.
The regret bound of AMA is better than MA especially if $\sum_{i=1}^d \hat{u}^{T,\frac{1}{2}}_{i} \ll d$ and 
\begin{align*}
    \sum_{i=1}^d \sqrt{\sum_{t=1}^T(\Delta_{t,i}-v^{\text{AMA}}_{t,i})^2}&\ll\sqrt{Td}.
\end{align*}

\subsection{Mean Matching vs. Mean + Covariance Matching in GFMN}
\label{appdx:mean-vs-covariance}
In this Appendix, we present comparative results between GFMN with mean feature matching vs. GFMN with mean + covariance feature matching.
Using the first and second moments to perform feature matching gives statistical advantage over using the first moment only.
In Table \ref{tab:mean_vs_covariance},
we can see that for different feature extractors,
performing mean + covariance feature matching produces significantly better results in terms of both IS and FID.
Mroueh \etal \cite{mroueh17_mcgan} have also demonstrated the advantages of using mean + covariance matching in the context of GANs.

\begin{table*}[!ht]
\small
  \caption{CIFAR10 results for GFMN with Mean Feature Matching vs. GFMN with Mean + Covariance Feature Matching.}
  \label{tab:mean_vs_covariance}
  \centering
  \begin{tabular}{lcccc}
    \hline
     \bf Feature Extractor & \multicolumn{2}{c}{\bf Mean Matching} & \multicolumn{2}{c}{\bf Mean + Covar. Matching} \\
                                      & IS & FID (5K / 50K)               & IS & FID (5K / 50K)  \\
    \hline
    DCGAN (Encoder)            & 3.76 $\pm$ 0.04 & 96.5 / 92.5  & 4.51  $\pm$ 0.06 & 82.8 / 78.3  \\
    Resnet18                   & 7.03 $\pm$ 0.11 & 35.7 / 31.1  & 7.92 $\pm$ 0.10 & 29.1 / 24.3 \\
    VGG19                      & 7.42 $\pm$ 0.09 & 27.5 / 22.8  & 7.88 $\pm$ 0.08 & 25.5 / 20.8 \\
    \hline
  \end{tabular}
\end{table*}

\subsection{Neural Network Architectures}
In Tables \ref{arch_gen} and \ref{arch_resnet}, and Figure \ref{arch_resblock} we detail the neural net architectures used in our experiments.
In both DCGAN-like generator and discriminator, 
an extra layer is added when using images of size 64$\times$64. 
In VGG19 architecture, 
after each convolution, we apply batch normalization and ReLU.
The Resnet generator is used for CelebA$_{128\times128}$ experiments and also for some experiments with CIFAR10 and STL10.
For these two last datasets,
the Resnet generator has 3 ResBlocks only,
and the output size of the $DENSE$ layer is $4 \times 4 \times 512 $.

\begin{table}[!ht]
\caption{DCGAN like Generator}
\vskip 0.15in
\begin{center}
\begin{small}
\begin{sc}
    \begin{tabular}{|c|}
    \hline
        $z \in \mathbb{R}^{100} \sim \mathcal{N}(0,I)$ \\ \hline
        dense $\to 4\times 4 \times 512$   \\ \hline
        $4\times4$, stride=2 Deconv BN 256 ReLU \\ \hline
        $4\times4$, stride=2 Deconv BN 128 ReLU \\ \hline
        $4\times4$, stride=2 Deconv BN 64 ReLU \\ \hline
        $3\times3$, stride=1 Conv 3 BN 64 ReLU \\ \hline
        $3\times3$, stride=1 Conv 3 BN 64 ReLU \\ \hline
        $3\times3$, stride=1 Conv 3 Tanh \\ \hline
    \end{tabular}
\end{sc}
\end{small}
\end{center}
\label{arch_gen}
\end{table}

\begin{figure}[ht]
\vskip 0.2in
\begin{center}
\centerline{\includegraphics[width=0.3\columnwidth]{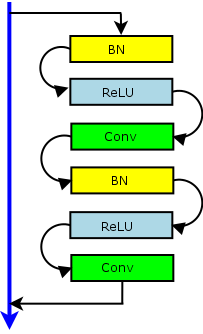}}
\caption{ResBlock}
\label{arch_resblock}
\end{center}
\end{figure}

\begin{table}[!ht]
\caption{Resnet Generator}
\label{sample-table}
\vskip 0.15in
\begin{center}
\begin{small}
\begin{sc}
    \begin{tabular}{|c|}
        \hline
        $z \in \mathbb{R}^{100} \sim \mathcal{N}(0,I)$ \\ \hline
        dense, $4\times4\times2048$ \\ \hline
        ResBlock up 1024 \\ \hline
        ResBlock up 512 \\ \hline
        ResBlock up 256 \\ \hline
        ResBlock up 128 \\ \hline
        ResBlock up 164 \\ \hline
        BN, ReLU, $3\times3$ conv $3$ \\ \hline
        Tanh \\ \hline
    \end{tabular}
\end{sc}
\end{small}
\end{center}
\label{arch_resnet}
\end{table}

\subsection{Pretraining of ImageNet Classifiers and Autoencoders}
\label{pretrain_class}
Both VGG19 and Resnet18 networks are trained with SGD with fixed $10^{-1}$ learning rate, 0.9 momentum term, and weight decay set to $5\times10^{-4}$. 
We pick models with best \emph{top-1} accuracy on the validation set over 100 epochs of training; 
29.14\% for VGG19 (image size 32$\times$32), 
and 39.63\% for Resnet18 (image size 32$\times$32). 
When training the classifiers we use random cropping and random horizontal flipping for data augmentation.
When using VGG19 and Resnet18 as feature extractors in GFMN,
we use features from the output of each ReLU that follows a conv. layer, for a total of 16 layers for VGG and 17 for Resnet18.

In our experiments with autoencoders (AE) we pretrained them using either mean squared error (MSE) or the Laplacian pyramid loss \cite{ling2006LapLoss,bojanowski2018optimizing}.
Let $E$ and $D$ be the encoder and the decoder networks with parameters $\phi$ and $\psi$,  respectively.
$$\min_{\phi,\psi} \mathbb{E}_{p_{data}}||x-D(E(x;\phi);\psi) ||^2$$
or the Laplacian pyramid loss \cite{ling2006LapLoss}
\begin{equation*}
    \label{eq:laploss}
	\text{Lap}_1(x,x') = \sum_{j} 2^{-2j} |L^j(x) - L^j(x')|_1
\end{equation*}
where~$L^j(x)$ is the~$j$-th level of the Laplacian pyramid representation of~$x$.
The Laplacian pyramid loss provides better signal for learning high frequencies of images and overcome some of the blurriness issue known from using a simple MSE loss.
\cite{bojanowski2018optimizing}
 recently demonstrated that the $\text{Lap}_1$ loss produces better results than $L_2$ loss for both autoencoders and generative models.

\subsection{Quantitative Evaluation Metrics}
\label{appdx:is_fid}
We evaluate our models using two quantitative metrics: Inception Score (IS) \cite{salimans2016improved} and Fr\'{e}chet Inception Distance (FID) \cite{heuselRUNKH17FID}.
We followed the same procedure used in previous work to calculate IS \cite{salimans2016improved,miyato2018spectral,RavuriMRV18}.  
For each trained generator, we calculate the IS for randomly generated 5000 images and repeat this procedure 10 times (for a total of 50K generated images) and report the average and the standard deviation of the IS.

We compute FID using two sample sizes of generated images: 5K and 50K.
In order to be consistent with previous works \cite{miyato2018spectral,RavuriMRV18} and be able to directly compare our quantitative results with theirs, 
the FID is computed as follows: 
\begin{itemize}
    \item CIFAR10: the statistics for the real data are computed using the 50K training images. This (real data) statistics are used in the FID computation of both 5K and 50K samples of generated images. This is consistent with both Miyato \etal \cite{miyato2018spectral} and Ravuri \etal \cite{RavuriMRV18} procedure to compute FID for CIFAR10 experiments.
    \item STL10: when using 5K generated images, the statistics for the real data are computed using the set of 5K (labeled) training images. This is consistent with the FID computation of Miyato \etal \cite{miyato2018spectral}. When using 50K generated images, the statistics for the real data are computed using a set of 50K images randomly sampled from the unlabeled STL10 dataset.
\end{itemize}
FID computation is repeated 3 times and the average is reported. There is very small variance in the FID results.

\subsection{Impact of the number of layers used for feature extraction}
\begin{figure*}[!h]
\vskip 0.2in
\begin{center}
    \begin{subfigure}[b]{0.3\textwidth}
        \includegraphics[width=\textwidth]  {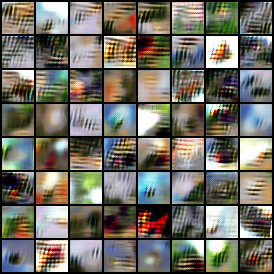}
        \caption{1 Layer}
        \label{fig:cifar10_vgg_l1}
    \end{subfigure}
    \begin{subfigure}[b]{0.3\textwidth}
        \includegraphics[width=\textwidth]  {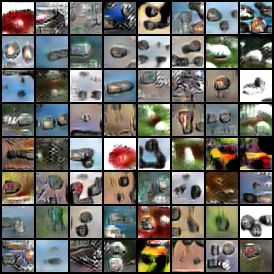}
        \caption{3 Layers}
        \label{fig:cifar10_vgg_l3}
    \end{subfigure}
    \begin{subfigure}[b]{0.3\textwidth}
        \includegraphics[width=\textwidth]  {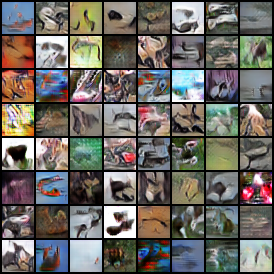}
        \caption{5 Layers}
        \label{fig:cifar10_vgg_l5}
    \end{subfigure}

    \begin{subfigure}[b]{0.3\textwidth}
        \includegraphics[width=\textwidth]  {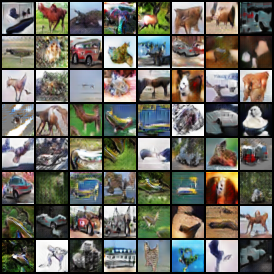}
        \caption{7 Layers}
        \label{fig:cifar10_vgg_l7}
    \end{subfigure}
    \begin{subfigure}[b]{0.3\textwidth}
        \includegraphics[width=\textwidth]  {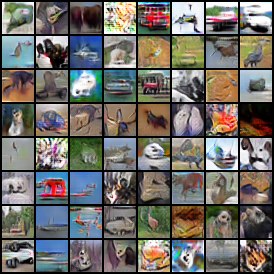}
        \caption{9 Layers}
        \label{fig:cifar10_vgg_l9}
    \end{subfigure}
    \begin{subfigure}[b]{0.3\textwidth}
        \includegraphics[width=\textwidth]  {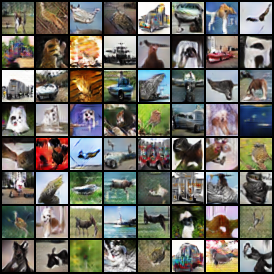}
        \caption{11 Layers}
        \label{fig:cifar10_vgg_l11}
    \end{subfigure}

    \begin{subfigure}[b]{0.3\textwidth}
        \includegraphics[width=\textwidth]  {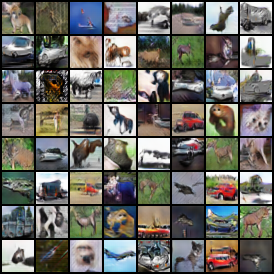}
        \caption{13 Layers}
        \label{fig:cifar10_vgg_l13}
    \end{subfigure}
    \begin{subfigure}[b]{0.3\textwidth}
        \includegraphics[width=\textwidth]  {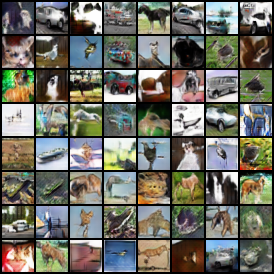}
        \caption{15 Layers}
        \label{fig:cifar10_vgg_l15}
    \end{subfigure}
    \begin{subfigure}[b]{0.3\textwidth}
        \includegraphics[width=\textwidth]  {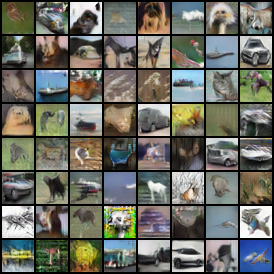}
        \caption{16 Layers}
        \label{fig:cifar10_vgg_l16}
    \end{subfigure}
\caption{Generated images from GFMN trained with a different number of VGG19 layers for feature extraction.}
\label{layers}
\end{center}
\end{figure*}

Figure \ref{layers} shows generated images from generators that were trained with a different number of layers employed to feature matching.
In all the results in Fig.\ref{layers}, the VGG19 network was used to perform feature extraction.
We can see a significant improvement in image quality when more layers are used.
Better results are achieved when 11 or more layers are used,
which corroborates the quantitative results in Sec. 5.2.

\subsection{Pretrained Generator/Discriminator in  WGAN-GP}
The objective of the experiments presented in this section is to evaluate if WGAN-GP can benefit from DCNN classifiers pretrained on ImageNet.
In the experiments, we used a WGAN-GP architecture where: 
(1) the discriminator is a VGG19 or a Resnet18; 
(2) the discriminator is pretrained on ImageNet; 
(3) the generator is pretrained on CIFAR10 through autoencoding.
Although we tried different hyperparameter combinations,
we were not able to successfully train WGAN-GP with VGG19 or Resnet18 discriminators.
Indeed, the discriminator, being pretrained on ImageNet, can quickly learn to distinguish between real and fake images. This limits the reliability of the gradient information from the discriminator, which in turn renders the training of a proper generator extremely challenging or even impossible. This is a well-known issue with GAN training \cite{goodfellow2014generative} where the training of the generator and discriminator must strike a balance. This phenomenon is covered in \cite{arjovsky17a} Section~3 (illustrated in their Figure~2) as one motivation for work like Wassertein GANs. If a discriminator can distinguish perfectly between real and fake early on, the generator cannot learn properly and the min/max game becomes unbalanced, having no good discriminator gradients for the generator to learn from, producing degenerate models. 
Figure \ref{wgangp} shows some examples of images generated by the unsuccessfully trained models.

\begin{figure}[ht]
\vskip 0.2in
\begin{center}
  \includegraphics[width=0.6\columnwidth]{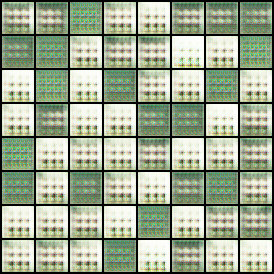}
\caption{Generated images by WGAN-GP with pretrained VGG19 as a discriminator.}
\label{wgangp}
\end{center}
\vskip -0.2in
\end{figure}

\subsection{Impact of Adam Moving Average for VGG19 feature extractor.}
\label{appdx:am_vs_ama_vgg19}

In this appendix, 
we present a comparison between the simple moving average (MA) and ADAM moving average (AMA) 
for the case where VGG19 ImageNet
classifier is used as a feature extractor. 
This experiment uses a minibatch size of 64. 
We can see
in Fig. \ref{am_vs_ama_vgg19} that AMA has a very positive effect in the quality of generated images.
GFMN trained with MA produces various images with some sort of crossing line artifacts.    

\begin{figure}[!ht]
\vskip 0.2in
\begin{center}
    \begin{subfigure}[b]{0.23\textwidth}
        \includegraphics[width=\textwidth]  {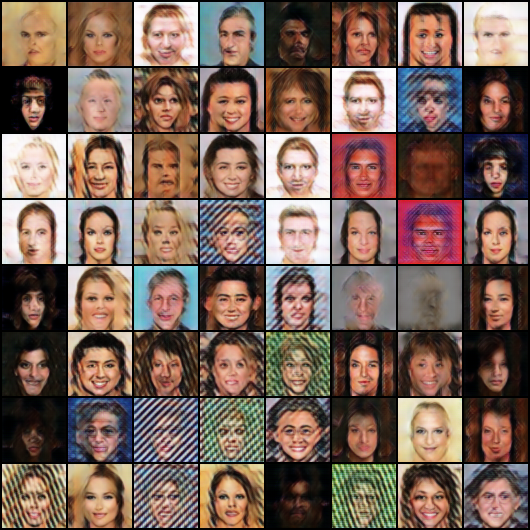}
        \caption{MA}
        \label{fig:celeba_ma_vgg19}
    \end{subfigure}
    \begin{subfigure}[b]{0.23\textwidth}
        \includegraphics[width=\textwidth]  {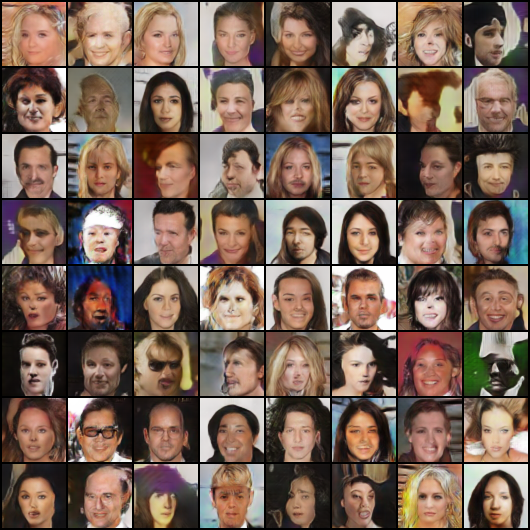}
        \caption{AMA}
        \label{fig:celeba_ama_vgg19}
    \end{subfigure}
\caption{Generated images from GFMN trained with either simple moving average (MA) or Adam
moving average (AMA). VGG19 ImageNet classifier is used as feature extractor.}
\label{am_vs_ama_vgg19}
\end{center}
\vskip -0.2in
\end{figure}

\subsection{Visual Comparison between GFMN and GMMN Generated Images.}
\label{appdx:gmmn}
Figure \ref{fig:gmmn} shows a visual comparison between images generated by GFMN (Figs. \ref{fig:cifar_vgg_gmmncomp} and \ref{fig:cifar_resnet_gmmncomp}) and Generative Moment Matching Networks (GMMN) (Figs. \ref{fig:cifar_gmmn_data} and \ref{fig:cifar_gmmn_ae}). GMMN \cite{li:2015:GMMN} generated images were obtained from Li \etal \cite{li:nips2017:MMDGAN}.
In this experiment,
both GMMN and GFMN use a DCGAN-like architecture in the generator.
Images generated by GFMN have significantly better quality compared to the ones generated by GMMN,
which corroborates the quantitative results in Sec. 5.4.

\begin{figure}[!ht]
\vskip 0.2in
\begin{center}
    \begin{subfigure}[b]{0.23\textwidth}
        \includegraphics[width=\textwidth]  {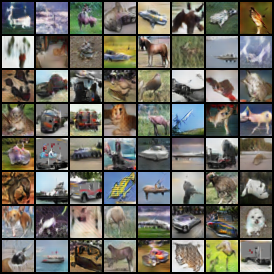}
        \caption{GFMN with VGG19 features}
        \label{fig:cifar_vgg_gmmncomp}
    \end{subfigure}
    \begin{subfigure}[b]{0.23\textwidth}
        \includegraphics[width=\textwidth]  {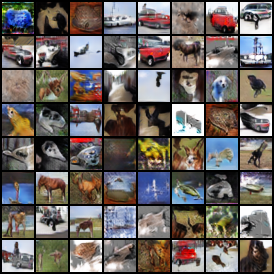}
        \caption{GFMN with Resnet18 features}
        \label{fig:cifar_resnet_gmmncomp}
    \end{subfigure}

    \begin{subfigure}[b]{0.23\textwidth}
        \includegraphics[width=\textwidth]  {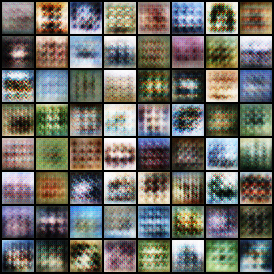}
        \caption{GMMN - Matching on data space}
        \label{fig:cifar_gmmn_data}
    \end{subfigure}
    \begin{subfigure}[b]{0.23\textwidth}
        \includegraphics[width=\textwidth]  {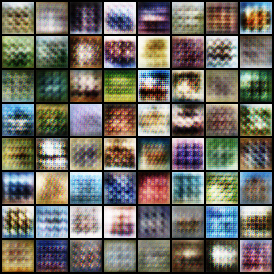}
        \caption{GMMN+AE - Matching on AE space}
        \label{fig:cifar_gmmn_ae}
    \end{subfigure}
\caption{Generated images from GFMN (\ref{fig:cifar_vgg_gmmncomp} and \ref{fig:cifar_resnet_gmmncomp}) and GMMN (\ref{fig:cifar_gmmn_data} and \ref{fig:cifar_gmmn_ae}). GMMN images were obtained from Li \etal \cite{li:nips2017:MMDGAN}.}
\label{fig:gmmn}
\end{center}
\end{figure}

\subsection{Autoencoder features vs. VGG19 features for CelebA.}
\label{appdx:ae_vs_vgg_celeba}

In this appendix, 
we present a comparison in image quality for autoencoder features vs. VGG19 features for the CelebA dataset.
We show results for both simple moving average (MA) and ADAM moving average (AMA),
for both cases we use a minibatch size of 64. 
In Fig. \ref{fig:ae_vs_vgg19},
we show generated images from GFMN trained with either
VGG19 features (top row) or autoencoder (AE) features (bottom row).
We show images generated by GFMN models trained with  simple moving average (MA) and Adam
moving average (AMA). 
We can note in the images that, 
although VGG19 features are from a cross-domain classifier, they lead to much better generation quality than AE features, 
specially for the MA case.

\begin{figure}[!ht]
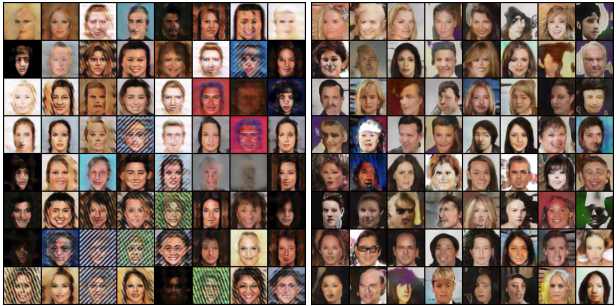
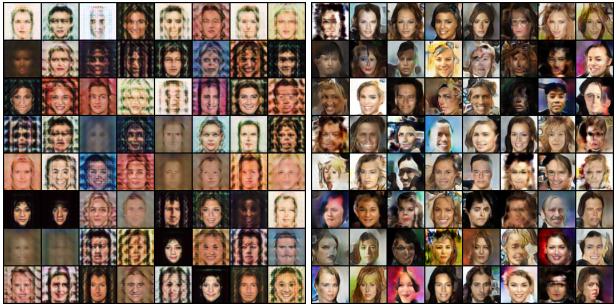

\vskip 0.2in
\begin{center}
    \begin{subfigure}[b]{0.23\textwidth}
        \includegraphics[width=\textwidth]  {figures/celeba_vgg19_nopretrain_no_adammovavrg.png}
        \caption{MA - VGG19 Features}
        \label{fig:celeba_vgg19_nopretrain_no_adammovavrg1}
    \end{subfigure}
    \begin{subfigure}[b]{0.23\textwidth}
        \includegraphics[width=\textwidth]{figures/celeba_vgg19_nopretrain_adammovavrg.png}
        \caption{AMA - VGG19 Features}
        \label{fig:celeba_vgg19_nopretrain_adammovavrg1}
    \end{subfigure}

    \begin{subfigure}[b]{0.23\textwidth}
        \includegraphics[width=\textwidth]{figures/celeba_ma_bts64.png}
        \caption{MA - AE Features}
        \label{fig:celeba_ma_bts641}
    \end{subfigure}
    \begin{subfigure}[b]{0.23\textwidth}
        \includegraphics[width=\textwidth]{figures/celeba_ama_bts64.png}
        \caption{AMA - AE Features}
        \label{fig:celeba_ama_bts641}
    \end{subfigure}
\caption{Generated images from GFMN trained with either
    VGG19 features (top row) or autoencoder (AE) features (bottom row).
    We show images generated by GFMN models trained with  simple moving average (MA) and Adam
moving average (AMA). Although VGG19 features are from a cross-domain classifier, they perform much better than AE features, specially for the MA case.}
\label{fig:ae_vs_vgg19}
\end{center}
\end{figure}

\end{document}